\newcommand{\Z}{{\mathbb{Z}}}
\newcommand{\Q}{{\mathbb{Q}}}
\newcommand{\R}{{\mathbb{R}}}
\newcommand{\F}{{\mathcal{F}}}
\DeclareMathOperator{\GP}{\mathcal{GP}}
\DeclareMathOperator{\sol}{sol}
\DeclareMathOperator{\lker}{l-ker}
\DeclareMathOperator{\rker}{r-ker}
\newtheorem{theorem}{Theorem}[section]
\newtheorem{lemma}[theorem]{Lemma}
\newtheorem{proposition}[theorem]{Proposition}
\newtheorem{corollary}[theorem]{Corollary}
\theoremstyle{definition}
\newtheorem{example}[theorem]{Example}
\newtheorem{remark}[theorem]{Remark}
\title{Algorithmic Linearly Constrained Gaussian Processes}
\author{
  Markus Lange-Hegermann \\
  Department of Electrical Engineering and Computer Science\\
  Ostwestfalen-Lippe University of Applied Sciences\\
  Lemgo  \\
  \texttt{markus.lange-hegermann@hs-owl.de} \\
}
\begin{document}

\maketitle

\newcommand{\articletype}{conference}

\begin{abstract}
	We algorithmically construct multi-output Gaussian process priors which satisfy linear differential equations.
	Our approach attempts to parametrize all solutions of the equations using Gr\"obner bases.
	If successful, a push forward Gaussian process along the paramerization is the desired prior.
	We consider several examples from physics, geomathematics and control, among them the full inhomogeneous system of Maxwell's equations.
	By bringing together stochastic learning and computer algebra in a novel way, we combine noisy observations with precise algebraic computations.
\end{abstract}

\section{Introduction}

In recent years, Gaussian process regression has become a prime regression technique \cite{RW}.
Roughly, a Gaussian process can be viewed as a suitable\footnote{They are the maximum entropy prior for finite mean and variance in the unknown behavior \cite{jaynes1968prior,jaynes2003probability}.} probability distribution on a set of functions, which we can condition on observations using Bayes' rule.
The resulting mean function is used for regression.
The strength of Gaussian process regression lies in \emph{avoiding overfitting} while still finding functions complex enough to describe \emph{any behavior} present in given observations, even in noisy or unstructured data.
Gaussian processes are usually applied when observations are rare or expensive to produce.
Applications range, among many others, from robotics \cite{DFR15}, biology \cite{LawrencePNAS2015}, global optimization \cite{OGR09}, astrophysics \cite{GHSQuasars} to engineering \cite{TLRB_gp}.

Incorporating justified assumptions into the prior helps these applications: the full information content of the scarce observations can be utilized to create a more precise regression model.
Examples of such assumptions are smooth or rough behavior, trends, homogeneous or heterogeneous noise, local or global behavior, and periodicity (cf.\ \S4 in \cite{RW},\cite{duvenaud-thesis-2014}).
Such assumptions are usually incorporated in the covariance structure of the Gaussian process.

Even certain physical laws, given by certain linear differential equations, could be incorporated into the covariance structures of Gaussian process priors.
Thereby, despite their random nature, all realizations and the mean function of the posterior strictly adhere to these physical laws\footnote{%
  For notational simplicity, we refrain from using the phrases ``almost surely'' and ``up to equivalence'' in this paper, e.g.\ by assuming separability.
}.
For example, \cite{MacedoCastro2008,scheuerer2012covariance} constructed covariance structures for divergence-free and curl-free vector fields, which \cite{Wahlstrom13modelingmagnetic,MagneticFieldGP} used to model electromagnetic phenomena.

A first step towards systematizing this construction was achieved in \cite{LinearlyConstrainedGP}.
In certain cases, a map into the solution set for physical laws could be found by a computation that does not necessarily terminate.
Having found such a map, one could assume a Gaussian process prior in its domain and push it forward.
This results in a Gaussian process prior for the solutions of the physical laws.

In Section~\ref{section_GP}, we stress that the map from \cite{LinearlyConstrainedGP} into the solution set should be a parametrization, i.e., surjective.
In Section~\ref{section_Parametrizations}, we combine this with an algorithm which computes this parametrization if it exists or reports failure if it does not exist.
	
This algorithm is a homological result in algebraic system theory (cf.\ \S7.(25) in \cite{Ob}).
Using Gr\"obner bases, it is fully algorithmic and works for a wide variety of operator rings; among them the polynomial ring in the partial derivatives, which models linear systems of differential equations with constant coefficients; the (various) Weyl algebras which model linear systems of differential equations with variable coefficients (cf.\ Example~\ref{example_control}); and similar rings for difference equations and combined delay differential equations.
To demonstrate the use of Gr\"obner bases, Example~\ref{example_sphere} shows explicit computer algebra code.

Using the results of this paper, one can add information to Gaussian processes\footnote{%
	The construction of covariance functions is applicable to kernels more generally.}
not only by
\begin{enumerate}[label=(\roman*)]
	\item conditioning on observations (Bayes' rule), but also by
	\item restricting to solutions of linear operator matrices by constructing a suitable prior.
\end{enumerate}
Since these two constructions are compatible, we can combine \emph{strict, global information} from equations with \emph{noisy, local information} from observations.
The author views this combination of techniques from homological algebra and machine learning as the main result of this paper, and the construction of covariance functions satisfying physical laws as a proof of concept.

Even though Gaussian processes are a highly precise interpolation tool, they lack in two regards: missing extrapolation capabilities and high computation time, cubically in the amount of observations.
These problems have, to a certain degree, been addressed: more powerfull covariance structures \cite{DNNasGP,GPLevyProcessPriors,SpectralMixtureKernels,DeepKernelLearning,ManifoldGPs} and several fast approximations to Gaussian process regression \cite{Titsias09variationallearning,GaussianProcessesforBigData,wilson2015msgp,ScalableGaussianProcesses,ScalableLogDeterminants} have been proposed.
This paper addresses these two problems from a complementary angle.
The linear differential equations allow to extrapolate and reduce the needed amount of observations, which improves computation time.

The promises in this introduction are demontrated in Example~\ref{example_maxwell}.
It constructs a Gaussian process such that all of its realizations satisfy the inhomogeneous Maxwell equations of electromagnetism.
Conditioning this Gaussian process on a \emph{single} observation of electric current yields, as expected, a magnetic field circling around this electric current.
This shows how to combine data (the electric current) with differential equations for a global model, which extrapolates away from the data.

\section{Differential Equations and Gaussian Processes}

\label{section_GP}

This section is mostly expository and summarizes Gaussian processes and how differential operators act on them.
Subsection~\ref{subsection_GP} summarizes Gaussian process regression.
We then introduce differential (Subsection~\ref{subsection_asumptions}) and other operators (Subsection~\ref{subsection_operators}).

\subsection{Gaussian processes}\label{subsection_GP}

A \emph{Gaussian process} $g=\GP(\mu,k)$ is a distribution on the set of functions $\R^d\to\R^\ell$ such that the function values $g(x_1),\ldots,g(x_n)$ at $x_1,\ldots,x_n\in\R^d$ have a joint Gaussian distribution.
It is specified by a \emph{mean function}
\[
  \mu:\R^d\to\R^\ell:x\mapsto E(g(x))
\]
and a positive semidefinite \emph{covariance function}
\begin{align*}
    k: \R^d\oplus\R^d &\to \R^{\ell\times\ell}_{\succeq0}:
    (x,x') 
    \mapsto E\left((g(x)-\mu(x))(g(x')-\mu(x'))^T\right)\mbox{ .}
\end{align*}

Assume the regression model $y_i=g(x_i)$ and condition on $n$ observations 
\[
  \left\{(x_i,y_i)\in\R^{1\times d}\oplus\R^{1\times \ell}\mid i=1,\ldots,n\right\}\mbox{ .}
\]
Denote by $k(x,X)\in \R^{\ell\times\ell n}$ resp.\ $k(X,X)\in \R^{\ell n\times\ell n}_{\succeq0}$ the (covariance) matrices obtained by concatenating the matrices $k(x,x_j)$ resp.\ the positive semidefinite block partitioned matrix with blocks $k(x_i,x_j)$.
Write $\mu(X)\in \R^{\ell\times n}$ for the matrix obtained by concatenating the vectors $\mu(x_i)$ and $y\in\R^{1\times \ell n}$ for the row vector obtained by concatenating the rows $y_i$.
The posterior 
\begin{align*}
  \GP\Big( \quad x\mapsto &\ \mu(x)+(y-\mu(X))k(X,X)^{-1}k(x,X)^T,\\
                        (x,x')\mapsto &\ k(x,x')-k(x,X)k(X,X)^{-1}k(x',X)^T\Big)\mbox{ ,}
\end{align*}
is again a Gaussian process and its mean function is used as regression model.

\subsection{Differential equations}\label{subsection_asumptions}

Roughly speaking, Gaussian processes are the linear objects among stochastic processes.
Hence, we find a rich interplay of Gaussian processes and linear operators.

For simplicity, let $R=\R[\partial_{x_1},\ldots,\partial_{x_d}]$ be the polynomial ring in the partial differential operators.
For different or more general operator rings see Subsection~\ref{subsection_operators}.
This ring models linear (partial) differential equations with constant coefficients, as it acts on the vector space $\F=C^\infty(\R^d,\R)$ of smooth functions, where $\partial_{x_i}$ acts by partial derivative w.r.t.\ $x_i$.
The set of realizations
of a Gaussian process with squared exponential covariance function is dense in $\F$ (cf.\ Thm.~12, Prop.~42 in \cite{SimSch16}).

The class of Gaussian processes is closed under matrices $B\in R^{\ell\times\ell''}$ of linear differential operators with constant coefficients.
Let $g=\GP(\mu,k)$ be a Gaussian process with realizations in a space $\F^{\ell''}$ of vectors with functions in $\F$ as entries.
Define the Gaussian process $B_*g$ as the Gaussian process induced by the pushforward measure under $B$ of the Gaussian measure induced by $g$.
It holds that
    
\begin{align}\label{eq_tansformation}
  B_*g = \GP(B\mu(x),Bk(x,x')(B')^T)\mbox{ ,}
\end{align}
where $B'$ denotes the operation of $B$ on functions with argument $x'\in\R^d$ \cite[Thm.~9]{RKHSProbabilityStatistics}.

The covariance function $k$ for such Gaussian processes $B_*g$ as in \eqref{eq_tansformation} is often singular.
This is to be expected, as $B_*g$ is rarely dense in $\F^\ell$.
For numerical stability, we tacitly assume the model $y_i=g(x_i)+\varepsilon$ for small Gaussian white noise term $\varepsilon$ and adopt $k$ by adding $\operatorname{var}(\varepsilon)$ to $k(x_i,x_i)$ for observations $x_i$.

\begin{example}
  Let $g=\GP(0,k(x,x'))$ be a scalar univariate Gaussian process with differentiable realizations.
  Then, the Gaussian process of derivatives of functions is given by
  \begin{align*} 
    \begin{bmatrix}\frac{\partial}{\partial x}\end{bmatrix}_*g = \GP\left(0,\frac{\partial^2}{\partial x\partial x'}k(x,x')\right)\mbox{ .}
  \end{align*}
  One can interpret this Gaussian process $\begin{bmatrix}\frac{\partial}{\partial x}\end{bmatrix}_*g$ as taking derivatives as measurement data and producing a regression model of derivatives.
\end{example}

We say that a Gaussian process is \emph{in} a function space, if its realizations are contained in said space.
For $A\in R^{\ell'\times\ell}$ define the \emph{solution set}
\[
  \sol_\F(A):=\{f\in \F^\ell\mid Af=0\}\mbox{ .}
\]
Such solution sets and Gaussian processes are connected in an almost tautological way.

\begin{lemma}\label{lemma_gp_operator}
Let $g=\GP(\mu,k)$ be a Gaussian process in $\F^{\ell\times1}$.
Then $g$ is also a Gaussian process in $\sol_\F(A)$ for $A\in R^{\ell'\times\ell}$ if and only if $\mu\in\sol_\F(A)$ and $A_*(g-\mu)$ is the constant zero process.
\end{lemma}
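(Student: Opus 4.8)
The plan is to express everything through one elementary fact: a Gaussian process equals the constant zero process exactly when both its mean function and its covariance function vanish identically. The link to the solution set is the observation that the realizations of $A_*g$ are precisely the functions $Af$ as $f$ ranges over the realizations of $g$. Consequently $g$ is a Gaussian process in $\sol_\F(A)$---meaning $Af=0$ for every realization $f$---if and only if $A_*g$ is the constant zero process. I would state this reduction first and treat it as the spine of the argument.

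Next I would compute $A_*g$ using the transformation rule \eqref{eq_tansformation}, giving
\[
  A_*g=\GP\!\big(A\mu(x),\,Ak(x,x')(A')^T\big)\mbox{ .}
\]
The same rule applied to $g-\mu=\GP(0,k)$ yields $A_*(g-\mu)=\GP(0,\,Ak(x,x')(A')^T)$, so the two processes carry the identical covariance $Ak(A')^T$ and differ only in their means, $A\mu$ versus $0$. By the zero-process criterion, $A_*g$ is constant zero iff $A\mu\equiv0$ and $Ak(A')^T\equiv0$. The first condition is by definition $\mu\in\sol_\F(A)$. Since $A_*(g-\mu)$ already has vanishing mean, the second condition $Ak(A')^T\equiv0$ is equivalent to $A_*(g-\mu)$ being the constant zero process. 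Chaining these equivalences gives the lemma.

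The step I expect to be the main obstacle is justifying the spine: that $Af=0$ holding for (almost) every realization is genuinely equivalent to $A_*g$ being degenerate at the zero function, and not merely to its mean vanishing. This is precisely where the paper's separability and ``almost surely'' conventions (see the footnote in the introduction) are needed, so that ``all realizations lie in $\sol_\F(A)$'' can be identified with the simultaneous vanishing of the mean and covariance of the pushforward. Once this identification is in place, the remaining manipulations are immediate consequences of \eqref{eq_tansformation}.
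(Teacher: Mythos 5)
Your proof is correct and follows essentially the same route as the paper's: both rest on the linearity $A_*g = A_*(g-\mu)+A_*\mu$ and on identifying ``all realizations of $g$ lie in $\sol_\F(A)$'' with ``$A_*g$ is the constant zero process.'' The only cosmetic difference is that you certify the zero process through the vanishing of the mean $A\mu$ and the covariance $Ak(x,x')(A')^T$ computed from \eqref{eq_tansformation}, whereas the paper argues directly with realizations (obtaining $\mu\in\sol_\F(A)$ from the mean being a realization); your variant is, if anything, slightly more explicit about the almost-sure identification that you correctly flag as the only delicate point.
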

\begin{proof}
Assume that $g$ is a Gaussian process in $\sol_\F(A)$.
Then, the mean function is a realization, thus $\mu\in\sol_\F(A)$.
Furthermore, for $\tilde g:=(g-\mu)=\GP(0,k)$ all realizations are annihilated by $A$, and hence $A_*\tilde g$ is the constant zero process.

Conversely, assume that $\mu\in\sol_\F(A)$ and $A_*(g-\mu)$ is the constant zero process.
This implies $0=A_*(g-\mu)=A_*g-A_*\mu=A_*g$, i.e.\ all realizations of $g$ become zero after a pushforward by $A$.
In particular, all realizations of $g$ are contained in $\sol_F(A)$.
\end{proof}

This lemma implies another advantage of choosing a zero mean function: it is always a solution of the linear differential equations.

Our goal is to construct Gaussian processes with realizations dense in the solution set $\sol_\F(A)$ of an operator matrix $A\in R^{\ell'\times\ell}$.
The following remark, implicit in \cite{LinearlyConstrainedGP}, is a first step towards an answer.

\begin{remark}\label{remark_subparametrization}
  Let $A\in R^{\ell'\times\ell}$ and $B\in R^{\ell\times\ell''}$ with $AB=0$.
  Let $g=\GP(0,k)$ be a Gaussian process in $\F^{\ell''}$.
    Then, the set of realizations of $B_*g$ is contained in $\sol_\F(A)$.
\end{remark}
This remark is implied by Lemma~\ref{lemma_gp_operator}, as $A_*(B_*g)=(AB)_*g=0_*g=0$.

We call $B\in R^{\ell\times\ell''}$ a \emph{parametrization} of $\sol_\F(A)$ if $\sol_\F(A)=B\F^{\ell''}$.
Parametrizations yield the denseness of the realizations of a Gaussian process $B_*g$ in $\sol_\F(A)$.

\begin{proposition}\label{proposition_denseparametrization}
  Let $B\in R^{\ell\times\ell''}$ be a parametrization of $\sol_\F(A)$ for $A\in R^{\ell'\times\ell}$.
  Let $g=\GP(0,k)$ be a Gaussian process dense in $\F^{\ell''}$.
  Then, the set of realizations of $B_*g$ is dense in $\sol_\F(A)$.
\end{proposition}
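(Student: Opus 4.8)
The plan is to reduce the claim to the elementary fact that a continuous map sends a dense subset of its domain onto a dense subset of the image of the whole domain. The three ingredients I would assemble are: (i) the realizations of $B_*g$ are precisely the images under $B$ of the realizations of $g$; (ii) $B$ is a continuous linear map; and (iii) the parametrization hypothesis identifies the image $B\F^{\ell''}$ with $\sol_\F(A)$.

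First I would fix the topology. Equip $\F=C^\infty(\R^d,\R)$ with its usual Fr\'echet topology of uniform convergence of all derivatives on compacta, and give $\F^{\ell''}$ and $\F^{\ell}$ the product topologies; ``dense in $\F^{\ell''}$'' refers to this topology, and ``dense in $\sol_\F(A)$'' to the induced subspace topology on $\sol_\F(A)\subseteq\F^{\ell}$. Each entry of $B$ is a constant-coefficient differential operator, and such operators are continuous on $C^\infty(\R^d)$; hence $B:\F^{\ell''}\to\F^{\ell}$ is continuous and linear. Since the spaces are metrizable, sequences suffice for all density arguments.

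Next, write $R_g\subseteq\F^{\ell''}$ for the set of realizations of $g$, which is dense in $\F^{\ell''}$ by hypothesis, and note that the realizations of $B_*g$ are exactly $B(R_g)=\{Bf\mid f\in R_g\}$. The containment $B(R_g)\subseteq\sol_\F(A)$ is immediate, since $R_g\subseteq\F^{\ell''}$ and $B\F^{\ell''}=\sol_\F(A)$ by the parametrization hypothesis. For density, take any $h\in\sol_\F(A)=B\F^{\ell''}$ and pick $f\in\F^{\ell''}$ with $Bf=h$; by density choose $f_n\in R_g$ with $f_n\to f$, so that continuity yields $Bf_n\to Bf=h$ with each $Bf_n\in B(R_g)$. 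Thus every element of $\sol_\F(A)$ is a limit of realizations of $B_*g$, which is the assertion.

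The main obstacle is not the topological step but justifying its two inputs rigorously. One must verify that the realizations of the pushforward $B_*g$ genuinely coincide with $B(R_g)$ applied pointwise, with no closure subtleties hidden in the pushforward measure, and that denseness of $g$ is asserted in the same topology in which $B$ is continuous. For the constant-coefficient case on $C^\infty(\R^d)$ this continuity is standard, but if a different operator ring or function space is used (Subsection~\ref{subsection_operators}), the continuity of $B$ and the metrizability of the ambient space would each have to be re-checked before the sequential argument above applies.
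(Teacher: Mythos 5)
Your proposal is correct and follows essentially the same route as the paper: equip $\F$ with the Fr\'echet topology of uniform convergence of all derivatives on compacta, observe that the differential operators in $B$ are continuous so that $B$ is a continuous map onto $\sol_\F(A)$ by the parametrization hypothesis, and conclude that the continuous surjective map sends the dense set of realizations of $g$ to a dense subset of $\sol_\F(A)$. Your explicit sequential argument and the caveats about identifying the realizations of $B_*g$ with $B$ applied to the realizations of $g$ only spell out details the paper leaves implicit.
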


This proposition is a consequence of partial derivatives being bounded, and hence continuous, when $\F$ is equipped with the Fr\'echet topology generated by the family of seminorms
\begin{align*}\label{Frechet_topology}
  \|f\|_{a,b}:=\sup_{\substack{i\in\Z_{\ge0}^d\\ |i|\le a}}\ \sup_{z\in [-b,b]^d}\ |\frac{\partial}{\partial z^i}f(z)|
\end{align*}
for $a,b\in\Z_{\ge0}$ (cf.\ \S10 in \cite{TopologicalVectorSpaces}).
Now, the continuous surjective map induced by $B$ maps a dense set to a dense set.

\subsection{Further operator rings}\label{subsection_operators}

The theory presented for differential equations with constant coefficients also holds for other rings $R$ of linear operators and function spaces $\F$.
The following three operator rings are prominent examples.

The polynomial ring $R=\R[x_1,\ldots,x_d]$ models polynomial equations when it acts on the set $\F$ of smooth functions defined on a (Zariski-)open set in $\R^d$.

For ordinary linear differential equations with rational\footnote{No major changes for polynomial, holonomic, or meromorphic coefficients.} coefficients consider the Weyl algebra $R=\R(t)\langle \partial_{t}\rangle$, with the non-commutative relation $\partial_{t}t=t\partial_{t}+1$ representing the product rule of differentiation.
We consider solutions in the set $\F$ of smooth functions defined on a co-finite set.

The polynomial ring $R=\R[\sigma_{x_1},\ldots,\sigma_{x_d}]$ models linear shift equations with constant coefficients when it acts on the set $\F=\R^{\Z_{\ge0}^d}$ of $d$-dimensional sequences by translation of the arguments.

\section{Computing parametrizations}\label{section_Parametrizations}

By the last section, constructing a parametrization $B$ of $\sol_\F(A)$ yields a Gaussian process dense in the solution set $\sol_\F(A)$ of an operator matrix $A\in R^{\ell'\times\ell}$.
Subsection~\ref{subsection_existence_parametrization} gives necessary and sufficient conditions for a parametrization to exist and Subsection~\ref{subsection_algorithms} describes their computation.

\subsection{Existence of parametrizations}\label{subsection_existence_parametrization}

It turns out that we can decide whether a parametrization exists purely algebraically, only using operations over $R$ that do not involve $\F$.

By $\rker(A)$ we denote the right kernel of $A\in R^{\ell'\times\ell}$, i.e.\ $\rker(A)=\{m\in R^{\ell\times1}\mid Am=0\}$.
By $\lker(A)$ we denote the left kernel of $A$, i.e.\ $\lker(A)=\{m\in R^{1\times\ell'}\mid mA=0\}$.
Abusing notation, denote any matrix as left resp.\ right kernel if its rows resp.\ columns generate the kernel as an $R$-module.

\begin{theorem}\label{theorem_parametrizable}
  Let $A\in R^{\ell'\times\ell}$.
  Define matrices $B=\rker(A)$ and $A'=\lker(B)$.
  Then $\sol_\F(A')$ is the largest subset of $\sol_\F(A)$ that is parametrizable and $B$ parametrizes $\sol_\F(A')$.
\end{theorem}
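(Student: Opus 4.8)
The plan is to establish three things in sequence: that $B = \rker(A)$ gives realizations inside $\sol_\F(A)$, that $B$ actually parametrizes $\sol_\F(A')$ where $A' = \lker(B)$, and that $\sol_\F(A')$ is the \emph{largest} parametrizable subset of $\sol_\F(A)$. The first containment $\sol_\F(A') \subseteq \sol_\F(A)$ follows because every column of $B$ lies in $\rker(A)$, so $AB = 0$; taking left kernels, each row of $A$ annihilates every column of $B$, which means each row of $A$ lies in $\lker(B)$, the row space of $A'$. Hence $A = CA'$ for some $C \in R^{\ell' \times \ell''}$ (where $A'$ has $\ell''$ rows), so $A'f = 0$ forces $Af = 0$, giving the inclusion at the level of solution sets.

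Next I would show $B$ parametrizes $\sol_\F(A')$, i.e.\ $\sol_\F(A') = B\F^{\ell''}$. The inclusion $B\F^{\ell''} \subseteq \sol_\F(A')$ is immediate from $A'B = 0$ (which holds by construction, since $A' = \lker(B)$ means every row of $A'$ annihilates every column of $B$) together with Remark~\ref{remark_subparametrization}. The reverse inclusion $\sol_\F(A') \subseteq B\F^{\ell''}$ is the substantive content. This is exactly the point where one invokes the homological input cited as \S7.(25) in \cite{Ob}: the construction $A \mapsto B = \rker(A) \mapsto A' = \lker(B)$ produces from the presentation of the module $M = R^{1\times\ell}/R^{1\times\ell'}A$ the beginning of a free resolution of its \emph{dual}, and exactness of $\Hom_R(-,\F)$ applied to the relevant complex is what turns the algebraic relation $\operatorname{im}(B) = \rker(A')$ (an equality of $R$-modules) into the functional-analytic statement $\sol_\F(A') = B\F^{\ell''}$. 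Concretely, I would argue that $\ker$ and $\operatorname{im}$ of these operator matrices, viewed over $R$, fit into an exact sequence, and that $\F$ is an \emph{injective} (or at least divisible / suitably cogenerating) $R$-module, so that applying $\sol_\F(-) = \Hom_R(\operatorname{coker}(-),\F)$ preserves exactness and yields the claimed surjectivity.

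Finally, for maximality I would take any $\tilde A \in R^{\tilde\ell \times \ell}$ with $\sol_\F(\tilde A) \subseteq \sol_\F(A)$ and $\sol_\F(\tilde A)$ parametrizable, say by some $\tilde B$ with $\tilde A \tilde B = 0$, and show $\sol_\F(\tilde A) \subseteq \sol_\F(A')$. The key algebraic fact is that a solution set $\sol_\F(\tilde A)$ being parametrizable forces $\tilde A$ to equal (up to the row-module it generates) the left kernel of its own right kernel, so that $\sol_\F(\tilde A) = \sol_\F(\lker(\rker(\tilde A)))$; combined with $\rker(A) \subseteq \rker(\tilde A)$ (which follows from $\sol_\F(\tilde A)\subseteq\sol_\F(A)$ via the injectivity of $\F$ again, dualizing the inclusion of solution sets back to an inclusion of kernels), one gets $\lker(\rker(\tilde A)) \supseteq \lker(\rker(A)) = A'$ at the module level, hence $\sol_\F(\tilde A) \subseteq \sol_\F(A')$.

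The main obstacle I expect is the reverse inclusion $\sol_\F(\tilde A') \subseteq B\F^{\ell''}$ in the second step — the passage from an equality of $R$-modules to a statement about solutions in the function space $\F$. This is where all the analytic content hides: the surjectivity of an operator matrix onto a solution set is false for arbitrary $\F$ and depends on $\F$ being a large enough (injective cogenerator) module over $R$, a property that must be verified for each of the relevant function spaces (smooth functions, sequences, etc.). Everything else is essentially diagram-chasing over $R$; I would therefore isolate the injectivity/cogenerator hypothesis on $\F$ explicitly and lean on the cited algebraic-systems-theory result to close this gap rather than reprove it from scratch.
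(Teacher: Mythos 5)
The paper does not actually prove Theorem~\ref{theorem_parametrizable}: after the finite-dimensional linear-algebra analogy it defers entirely to the cited literature (\cite[Thm.~2]{ZSHinverseSyzygies}, \cite[Thm.~3]{Zerz}, \cite[\S7.(24)]{Ob}). Your sketch reconstructs exactly the argument those references give, and its main merit is that it isolates the hypothesis the paper never states: everything hinges on $\F$ being an injective cogenerator over $R$. Injectivity turns the row-module exact sequence $R^{1\times\ell'''}\xrightarrow{\cdot A'}R^{1\times\ell}\xrightarrow{\cdot B}R^{1\times\ell''}$ (exact at the middle \emph{by construction} of $A'=\lker(B)$) into the surjectivity $\sol_\F(A')=B\F^{\ell''}$, and the cogenerator property is what lets you pass back from inclusions of solution sets to inclusions of row modules in the maximality step. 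Note that the algebraic input you name for the surjectivity, $\operatorname{im}(B)=\rker(A')$, is a column-module statement; the sequence one actually dualizes with $\operatorname{Hom}_R(-,\F)$ is the row-module one above, whose exactness is tautological --- so the entire analytic content sits in the injectivity of $\F$, as you say. This property holds for $C^\infty(\R^d,\R)$ over $\R[\partial_{x_1},\ldots,\partial_{x_d}]$ by Ehrenpreis--Malgrange--Palamodov, but does require separate verification for the Weyl-algebra and shift-algebra settings the paper also invokes.

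Two smaller points. First, in the maximality step the inclusion is written backwards: $\sol_\F(\tilde A)\subseteq\sol_\F(A)$ gives $\operatorname{rowmod}(A)\subseteq\operatorname{rowmod}(\tilde A)$ and hence $\rker(\tilde A)\subseteq\rker(A)$, not $\rker(A)\subseteq\rker(\tilde A)$; only the former yields $\lker(\rker(\tilde A))\supseteq\lker(\rker(A))$ after applying the inclusion-reversing $\lker$, which is the conclusion you (correctly) draw. Second, your maximality argument quietly assumes the competing parametrizable subset is itself a behavior $\sol_\F(\tilde A)$. It is both more general and shorter to take an arbitrary parametrizable subset $\tilde B\F^m\subseteq\sol_\F(A)$: the cogenerator property forces $A\tilde B=0$ over $R$, so the columns of $\tilde B$ lie in $\rker(A)=\operatorname{colmod}(B)$, whence $\tilde B=BC$ and $\tilde B\F^m\subseteq B\F^{\ell''}=\sol_\F(A')$. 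With these repairs your outline is a faithful account of the proof the paper delegates to \cite{Ob} and \cite{Zerz}.
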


A well-known special case of this theorem are finite dimensional vector spaces, with $R=\F$ a field.
In that case, $\sol_\F(A)$ can be found by solving the homogeneous system of linear equations $Ab=0$ with the Gaussian algorithm and write a base for the solutions of $b$ in the columns of a matrix $B$.
This matrix $B$ is also called the (right) kernel of $A$.
Now, we wonder whether there are additional equations satisfied by the above solutions, i.e.\ when does $Ab=0$ imply $A'b=0$.
These equations $A'$ are the (left) kernel of $B$.
At least in the case of finite dimensional vector spaces\footnote{As finite dimensional vector spaces are reflexive, i.e.\ isomorphic to their bi-dual.}, there are no additional equations\footnote{More precisely, $A$ and $A'$ have the same row space.}.
However, for general rings $R$, the left kernel $A'$ of the right kernel $B$ of $A$ is not necessarily $A$ (up to an equivalence).
For example, the solution set $\sol_\F(A')$ is the subset of controllable behaviors in $\sol_\F(A)$.

\begin{corollary}\label{corollary_parametrizable}
  In Theorem~\ref{theorem_parametrizable}, $\sol_\F(A)$ is parametrizable if and only if the rows of $A$ and $A'$ generate the same row-module.
  Since $AB=0$, this is the case if all rows of $A'$ are contained in the row module generated by the rows of $A$.
  In this case, $\sol_\F(A)$ is parametrized by $B$. 
  Furthermore, a Gaussian process $g$ with realizations dense in $\F^{\ell''}$ leads to a Gaussian process $B_*g$ with realizations dense in $\sol_\F(A)$.
\end{corollary}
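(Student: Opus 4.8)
The plan is to deduce everything from Theorem~\ref{theorem_parametrizable} together with the elementary fact that $\sol_\F(-)$ depends only on the row-module of its argument. First I would record the one inclusion that holds unconditionally: since $B=\rker(A)$ satisfies $AB=0$, every row of $A$ annihilates $B$ and hence lies in $\lker(B)$, and as the rows of $A'=\lker(B)$ generate this left kernel as an $R$-module we obtain $\langle A\rangle\subseteq\langle A'\rangle$, writing $\langle\cdot\rangle$ for the row-module. This immediately settles the second sentence of the corollary: the two row-modules coincide exactly when $\langle A'\rangle\subseteq\langle A\rangle$, i.e.\ when every row of $A'$ is an $R$-linear combination of the rows of $A$. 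Dually, the same module inclusion yields $\sol_\F(A')\subseteq\sol_\F(A)$, because any $f$ annihilated by all rows of $A'$ is annihilated by every $R$-combination of them, hence by $A$.

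Next I would prove the equivalence. For the ``if'' direction, assume $\langle A\rangle=\langle A'\rangle$. Since the solution set of a matrix is cut out precisely by the elements of its row-module, equal row-modules force $\sol_\F(A)=\sol_\F(A')$; by Theorem~\ref{theorem_parametrizable} the latter equals $B\F^{\ell''}$, so $\sol_\F(A)$ is parametrizable and $B$ is the parametrization. For the ``only if'' direction, assume $\sol_\F(A)$ is parametrizable. Then $\sol_\F(A)$ is a parametrizable subset of itself, and Theorem~\ref{theorem_parametrizable} identifies $\sol_\F(A')$ as the \emph{largest} parametrizable subset of $\sol_\F(A)$; hence $\sol_\F(A)\subseteq\sol_\F(A')$. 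Combined with the reverse inclusion noted above this gives $\sol_\F(A)=\sol_\F(A')$.

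It remains to convert this equality of solution sets back into the equality $\langle A\rangle=\langle A'\rangle$ of row-modules, and this is the step I expect to be the main obstacle. One cannot argue purely set-theoretically here: in general two distinct row-modules can cut out the same solution set. The passage is legitimate because $\F$ is an injective cogenerator for $R$ in each of the settings under consideration (the Malgrange-type duality underlying \cite{Ob}), so the contravariant functor $M\mapsto\sol_\F(M)$ is faithful and restricts to an order-reversing bijection between finitely generated row-modules and their solution sets; under this correspondence $\sol_\F(A)=\sol_\F(A')$ yields $\langle A\rangle=\langle A'\rangle$, completing the equivalence. Finally, in the parametrizable case $B$ parametrizes $\sol_\F(A)$ by the above, so applying Proposition~\ref{proposition_denseparametrization} to a Gaussian process $g$ dense in $\F^{\ell''}$ shows that $B_*g$ has realizations dense in $\sol_\F(A)$, which is the last assertion.
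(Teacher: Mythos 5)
Your proof is correct and follows essentially the standard argument that the paper itself does not spell out but delegates to its references \cite{ZSHinverseSyzygies,Zerz,Ob}: the easy inclusion $\langle A\rangle\subseteq\langle A'\rangle$, the maximality statement of Theorem~\ref{theorem_parametrizable} for the ``only if'' direction, and Proposition~\ref{proposition_denseparametrization} for the density claim are all used exactly as intended. You also correctly isolate the one genuinely nontrivial step --- passing from $\sol_\F(A)=\sol_\F(A')$ back to equality of the row-modules --- and attribute it to $\F$ being an injective cogenerator (Oberst/Malgrange duality), which is precisely the machinery the cited works invoke.
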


For a formal proof of this theorem and its corollary see \cite[Thm.~2]{ZSHinverseSyzygies}, \cite[Thm.~3, Alg.~1, Lemma~1.2.3]{Zerz}, or \cite[\S7.(24)]{Ob} and for additional characterizations, generalizations, and proofs using more homological machinery see \cite{QGrade,Q_habil,BaPurity_MTNS10,SZinverseSyzygies,CQR05_nonote,RobRecentProgress} and references therein.

The approach assigns a prior to the parametrising functions and pushes this prior forward to a prior of the solution set $\sol_\F(A)$.
The paramerization is not canonical, and hence different parametrizations might lead to different priors.

\subsection{Algorithms}\label{subsection_algorithms}

Summarizing Theorem~\ref{theorem_parametrizable} and Corollary~\ref{corollary_parametrizable} algorithmically, we need to compute right kernels (of $A$), compute left kernels (of $B$), and decide whether rows (of $A'$) are contained in a row module (generated by the rows of $A$).
All these computations are an application of Gr\"obner basis algorithms.

In the recent decades, Gr\"obner bases algorithms have become one of the core algorithms of computer algebra, with manifold applications in geometry, system theory, natural sciences, automatic theorem proving, post-quantum cryptography, and many others.
Reduced Gr\"obner bases generalize the reduced echelon form from linear systems to systems of polynomial (and hence linear operator) equations, by bringing them into a standard form\footnote{This standard form depends on choices, specifically a so-called monomial order.}.
They are computed by Buchberger's algorithm, which is a generalization of the Gaussian and Euclidean algorithm and a special case of the Knuth-Bendix completion algorithm.

Similar to the reduced echelon form, Gröbner bases allow to compute all solutions over $R$ (not $\F$) of the homogeneous system and compute, if it exists, a particular solution over $R$ (not $\F$) for an inhomogeneous system.
Solving homogeneous systems is the same as computing its right resp.\ left kernel.
Solving inhomogeneous equations decides whether an element is contained in a module.
Alternatively, the uniqueness of reduced Gr\"obner bases also decides submodule equality.

A formal description of Gr\"obner bases would exceed the scope of this note.
Instead, we refer to the excellent literature \cite{SturmfelsWhatIs,eis,al,gp,GerI,Buch}.
Gr\"obner basis algorithms exist for many rings $R$.
They historically emerged from polynomial rings, and have since been generalized to the Weyl algebra, the shift algebra, and, more generally, $G$-algebras \cite{LevThesis,plural} and Ore-algebras \cite{robphd,JO}.
They are implemented in various computer algebra systems, Singular \cite{singular410} and Macaulay2 \cite{M2} are two well-known examples.
Even though the complexity of Gr\"obner bases is in the vicinity of EXPSPACE completeness (cf.\ \cite{Mayr,mayrMeyer,BS88}), the ``average interesting example'' (e.g.\ every example in this paper) usually terminates instantaneously.
This holds in particular since the Gr\"obner basis computations only involve the operator equations, but not the data in any way.

\subsection{Hyperparameters}

Many covariance functions\footnote{%
  Sometimes even the mean function contains hyperparameters.
  These additional hyperparameters are usually not very expressive, compared to the non-parametric Gaussian process model.
} incorporate hyperparameters and advanced methods specifically add more hyperparameters to Gaussian processes, see e.g. \cite{Snelson03warpedgaussian,ManifoldGPs,SpectralMixtureKernels}, for additional flexibility.
The approach in this paper is the opposite by restricting the Gaussian process prior to solutions of an operator matrix.
Of course, the prior of the parametrizing functions can still contain hyperparameters, which can be determined by maximizing the likelihood.
Many important applications contain unknown parameters in the equations.
Such parameters can also be estimated by the likelihood.

Consider ordinary differential equations, with constant resp.\ variable coefficients.
The solution set of an operator matrix is a direct sum of parametrizable functions and a finite dimensional set of functions, due to the Smith form resp.\ Jacobson form.
In many cases, in particular the case of constant coefficients, the solution set of the finite dimensional summand can easily be computed.
This paper also allows to compute with the parametrizable summand of the solution set and estimate parameters and hyperparameters of both summands together.

\section{Examples}\label{section_examples}

\begin{example}\label{example_maxwell}  
  Maxwell's equations of electromagnetism uses curl and divergence operators as building blocks.
  It is a well-known result that the solutions of the \emph{inhomogeneous} Maxwell equations are parametrized by the electric and magnetic potentials.
  We verify this and use the parametrization to construct a Gaussian process, such that its realizations adhere to Maxwell's equations.
  In Figure~\ref{figure_example_maxwell}, we condition this prior on a single observation of flowing electric current, which leads to the magnetic field circling around the current.
  This usage of differential equations shows an extrapolation away from the data point in space and into other components.

  The inhomogenous Maxwell equations are given by the operator matrix
  \begin{align*}
    A:=
    \begin{bmatrix}
      0&-\partial_z&\partial_y&\partial_t&0&0  &0&0&0&0 \\
      \partial_z&0&-\partial_x&0&\partial_t&0  &0&0&0&0 \\
      -\partial_y&\partial_x&0&0&0&\partial_t  &0&0&0&0 \\
      0&0&0&\partial_x&\partial_y&\partial_z   &0&0&0&0 \\
      -\partial_t&0&0&0&-\partial_z&\partial_y &-1&0&0&0 \\
      0&-\partial_t&0&\partial_z&0&-\partial_x &0&-1&0&0 \\
      0&0&-\partial_t&-\partial_y&\partial_x&0 &0&0&-1&0 \\
      \partial_x&\partial_y&\partial_z&0&0&0   &0&0&0&-1 
    \end{bmatrix}
  \end{align*}
  applied to three components of the electric field, three components of the magnetic (pseudo) field, three components of electric current, and one component of electric flux.
  We have set all constants to $1$.
  
  Using Gr\"obner bases, one computes the right kernel
  \begin{align*}
     B:=
    \begin{bmatrix}
      \partial_x					&\partial_t				&0					&0\\
      \partial_y					&0					&\partial_t				&0\\
      \partial_z					&0					&0					&\partial_t\\
      0							&0					&\partial_z				&-\partial_y\\
      0							&-\partial_z				&0					&\partial_x\\
      0							&\partial_y				&-\partial_x				&0\\
      -\partial_t\partial_x				&\partial_y^2+\partial_z^2-\partial_t^2	&-\partial_y\partial_x			&-\partial_z\partial_x\\
      -\partial_t\partial_y				&-\partial_y\partial_x			&\partial_x^2+\partial_z^2-\partial_t^2	&-\partial_z\partial_y\\
      -\partial_t\partial_z				&-\partial_z\partial_x			&-\partial_z\partial_y			&\partial_x^2+\partial_y^2-\partial_t^2\\
      \partial_x^2+\partial_y^2+\partial_z^2		&\partial_t\partial_x			&\partial_t\partial_y			&\partial_t\partial_z
    \end{bmatrix}
  \end{align*}
  of $A$ and verifies that it parametrizes the set of solutions of the inhomogeneous Maxwell equations.
 
  For the demonstration in Figure~\ref{figure_example_maxwell} we assume squared exponential covariance functions and a zero mean function for four uncorrelated parametrising functions (electric potential and magnetic potentials).
  
  \DeclareRobustCommand{\meanfield}{$\frac{1}{10}\left( {x}^{2}+{y}^{2}-4 \right) \exp\left(-\frac{1}{2}\left(x^2+y^2\right)\right)\cdot\begin{bmatrix} -y \\ x \end{bmatrix}$}
  \begin{figure}[ht]
    \centering
    \centerline{\input{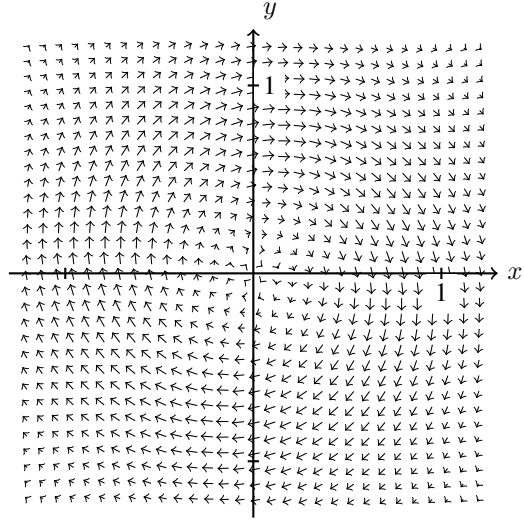}}
    \caption{
      We condition the prior on solutions of Maxwell's equations from Example~\ref{example_maxwell} on an electric current in $z$-direction and zero electric flux at the origin $x=y=z=t=0$.
      The diagram shows the mean posterior magnetic field in the $(z,t)=(0,0)$-plane.
      As expected by the right hand rule, it circles around the point with electric current.
    }
    \label{figure_example_maxwell}
  \end{figure}
  
\end{example}

\begin{example}\label{example_control}
	
 		We consider the time-varying control system $\partial_tx(t)=t^3u(t)$ from \cite[Example~1.5.7]{Q_CIRM10} over the one-dimensional Weyl algebra $R=\R(t)\langle \partial_{t}\rangle$.
 		
 		This system, given by the matrix $A:=\begin{bmatrix} \partial_t & -t^3 \end{bmatrix}$, is parametrizable by
 		\[
 		  B = \begin{bmatrix} 1 \\ \frac{1}{t^3}\partial_t\end{bmatrix}\mbox{ .}
 		\]
 		For a parametrizing functions with squared exponential covariance functions $k(t_1,t_2)=\exp(-\frac12(t_1-t_2)^2)$ and a zero mean function, the covariance function for $(x,u)$  is 
 		\[
 		\begin{bmatrix}
 		1 & \frac{t_1-t_2}{t_2^3} \\
 		\frac{t_2-t_1}{t_1^3} & \frac{(t_2-t_1-1)(t_1-t_2-1)}{t_2^3t_1^3}
 		\end{bmatrix} \exp\left(-\frac12(t_1-t_2)^2\right)\mbox{ .}
 		\]
 		For a demonstration of how to observe resp.\ control such a system see Figures \ref{figure_example_control1} resp.\ \ref{figure_example_control2}.
 		
 		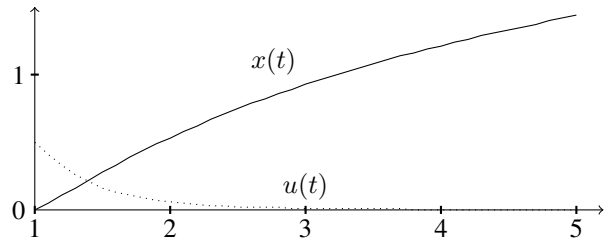
\begin{figure}[ht]
 			\centering
 			\centerline{\ifthenelse{\equal{\articletype}{journal}}{%
  \begin{tikzpicture}[scale=2.5]
}{%
  \begin{tikzpicture}[scale=1.8]
}
    \draw[->]	(1,0) node[anchor=east] {0} --
            (1,0) node[anchor=north] {1} --
            (2,0) node[anchor=north] {2} --
            (3,0) node[anchor=north] {3} --
            (4,0) node[anchor=north] {4} --
            (5,0) node[anchor=north] {5} --
            (5.2,0);
    \draw[->]	(1,0) --
            (1,1) node[anchor=east] {1} --
            (1,1.5);
    
    \draw[line width=0.8] (0.97,1) -- (1.03,1);
    \draw[line width=0.8] (1,-0.03) -- (1,0.03);
    \draw[line width=0.8] (2,-0.03) -- (2,0.03);
    \draw[line width=0.8] (3,-0.03) -- (3,0.03);
    \draw[line width=0.8] (4,-0.03) -- (4,0.03);
    \draw[line width=0.8] (5,-0.03) -- (5,0.03);
    
    \draw[-,dotted]
    (1.00, 0.50) -- (1.10, 0.41) -- (1.20, 0.33) -- (1.30, 0.26) -- (1.40, 0.21) -- (1.50, 0.16) -- (1.60, 0.13) -- (1.70, 0.11) -- (1.80, 0.09) -- (1.90, 0.07) -- (2.00, 0.06) -- (2.10, 0.05) -- (2.20, 0.04) -- (2.30, 0.03) -- (2.40, 0.03) -- (2.50, 0.02) -- (2.60, 0.02) -- (2.70, 0.02) -- (2.80, 0.02) -- (2.90, 0.01) -- (3.00, 0.01) node[anchor=south] {$u(t)$} -- (3.10, 0.01) -- (3.20, 0.01) -- (3.30, 0.01) -- (3.40, 0.01) -- (3.50, 0.01) -- (3.60, 0.01) -- (3.70, 0.01) -- (3.80, 0.00) -- (3.90, 0.00) -- (4.00, 0.00) -- (4.10, 0.00) -- (4.20, 0.00) -- (4.30, 0.00) -- (4.40, 0.00) -- (4.50, 0.00) -- (4.60, 0.00) -- (4.70, 0.00) -- (4.80, 0.00) -- (4.90, 0.00) -- (5.00, 0.00);
    \draw[-]
    (1.00, 0.00) -- (1.10, 0.05) -- (1.20, 0.11) -- (1.30, 0.16) -- (1.40, 0.22) -- (1.50, 0.28) -- (1.60, 0.33) -- (1.70, 0.39) -- (1.80, 0.44) -- (1.90, 0.49) -- (2.00, 0.53) -- (2.10, 0.58) -- (2.20, 0.62) -- (2.30, 0.67) -- (2.40, 0.71) -- (2.50, 0.75) -- (2.60, 0.79) -- (2.70, 0.82) -- (2.80, 0.86) -- (2.90, 0.89) -- (3.00, 0.93) node[anchor=south east] {$x(t)$} -- (3.10, 0.96) -- (3.20, 0.99) -- (3.30, 1.02) -- (3.40, 1.05) -- (3.50, 1.08) -- (3.60, 1.11) -- (3.70, 1.14) -- (3.80, 1.16) -- (3.90, 1.19) -- (4.00, 1.21) -- (4.10, 1.24) -- (4.20, 1.26) -- (4.30, 1.29) -- (4.40, 1.31) -- (4.50, 1.33) -- (4.60, 1.35) -- (4.70, 1.37) -- (4.80, 1.40) -- (4.90, 1.42) -- (5.00, 1.44);
\end{tikzpicture}}
 			\caption{
 				The state function $x(t)$ of the system in Example~\ref{example_control} can be influenced by assigning an input function $u(t)$.
 				E.g., leaving the state $x(t)$ unspecified except for the boundary condition $x(1)=0$ and fixing the input $u(t)=\frac{1}{t^4+1}$ for $t\in\{1,\frac{11}{10},\frac{12}{10},\ldots,5\}$ leads to the above posterior means.
 				This model yields $x(5)\approx 1.436537$, close to $\int_1^5\frac{t^3}{t^4+1}\operatorname{d}\!t\approx 1.436551$.
 			}
 			\label{figure_example_control1}
 		\end{figure}
 		
 		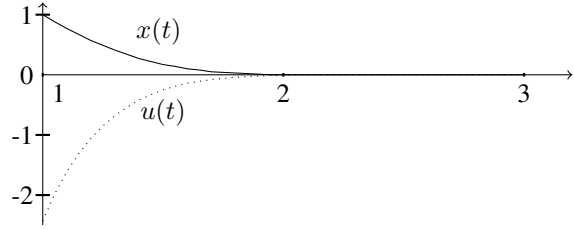
\begin{figure}[ht]
 			\centering
 			\centerline{\ifthenelse{\equal{\articletype}{journal}}{%
  \begin{tikzpicture}[xscale=4,yscale=1]
}{%
  \begin{tikzpicture}[xscale=3.2,yscale=0.8]
}
    \draw[->]	
            (1,0) node[anchor=north west] {1} --
            (2,0) node[anchor=north] {2} --
            (3,0) node[anchor=north] {3} --
            (3.2,0);
    \draw[->]	(1,-2.5) --
            (1,-2) node[anchor=east] {-2} --
            (1,-1) node[anchor=east] {-1} --
            (1,0) node[anchor=east] {0} --
            (1,1) node[anchor=east] {1} --
            (1,1.2);
    
    \draw[line width=0.8] (0.97,1) -- (1.03,1);
    \draw[line width=0.8] (0.97,-1) -- (1.03,-1);
    \draw[line width=0.8] (0.97,-2) -- (1.03,-2);
    \draw[line width=0.8] (1,-0.03) -- (1,0.03);
    \draw[line width=0.8] (2,-0.03) -- (2,0.03);
    \draw[line width=0.8] (3,-0.03) -- (3,0.03);
    
    \draw[-,dotted]
    (1.00, -2.43) -- (1.05, -1.98) -- (1.10, -1.61) -- (1.15, -1.31) -- (1.20, -1.06) -- (1.25, -0.85) -- (1.30, -0.68) -- (1.35, -0.54) -- (1.40, -0.42) -- (1.45, -0.33) -- (1.50, -0.25) node[anchor=north] {$u(t)$} -- (1.55, -0.19) -- (1.60, -0.14) -- (1.65, -0.11) -- (1.70, -0.08) -- (1.75, -0.06) -- (1.80, -0.04) -- (1.85, -0.03) -- (1.90, -0.02) -- (1.95, -0.01) -- (2.00, -0.01) -- (2.05, -0.00) -- (2.10, -0.00) -- (2.15, 0.00) -- (2.20, 0.00) -- (2.25, 0.00) -- (2.30, 0.00) -- (2.35, 0.00) -- (2.40, 0.00) -- (2.45, 0.00) -- (2.50, -0.00) -- (2.55, -0.00) -- (2.60, -0.00) -- (2.65, -0.00) -- (2.70, -0.00) -- (2.75, -0.00) -- (2.80, -0.00) -- (2.85, -0.00) -- (2.90, -0.00) -- (2.95, 0.00) -- (3.00, 0.00);
    \draw[-] 
    (1.00, 1.00) -- (1.05, 0.88) -- (1.10, 0.77) -- (1.15, 0.67) -- (1.20, 0.57) -- (1.25, 0.49) -- (1.30, 0.41) -- (1.35, 0.34) node[anchor=south west] {$x(t)$} -- (1.40, 0.27) -- (1.45, 0.22) -- (1.50, 0.17) -- (1.55, 0.14) -- (1.60, 0.10) -- (1.65, 0.08) -- (1.70, 0.05) -- (1.75, 0.04) -- (1.80, 0.03) -- (1.85, 0.02) -- (1.90, 0.01) -- (1.95, 0.00) -- (2.00, 0.00) -- (2.05, -0.00) -- (2.10, -0.00) -- (2.15, -0.00) -- (2.20, -0.00) -- (2.25, -0.00) -- (2.30, -0.00) -- (2.35, 0.00) -- (2.40, 0.00) -- (2.45, 0.00) -- (2.50, 0.00) -- (2.55, 0.00) -- (2.60, 0.00) -- (2.65, 0.00) -- (2.70, 0.00) -- (2.75, -0.00) -- (2.80, -0.00) -- (2.85, -0.00) -- (2.90, -0.00) -- (2.95, -0.00) -- (3.00, -0.00);
\end{tikzpicture}}
 			\caption{
 				We control the system in Example~\ref{example_control} by specifying a desired behavior for the state $x(t)$ and letting the Gaussian process construct a suitable input $u(t)$, which is completely unspecified by us.
 				Starting with $x(1)=1$ we give $u(t)$ one time step to control $x(t)$ to zero, e.g., by setting $x(t)=0$ for $t\in\{\frac{20}{10},\frac{21}{10},\ldots,5\}$.
 			}
 			\label{figure_example_control2}
 		\end{figure}
 		
\end{example}

\begin{example}\label{example_sphere}
	We reproduce the well-known fact that divergence-free (vector) fields can be parametrized by the curl operator.
	This has been used in connection with Gaussian processes to model electric and magnetic phenomena \cite{MacedoCastro2008,Wahlstrom13modelingmagnetic,MagneticFieldGP}.
	The same algebraic computation also constructs a prior for tangent fields of a sphere.
	
	Let $R=\Q[x_1,x_2,x_3]$ resp.\ $R=\Q[\partial_1,\partial_2,\partial_3]$ be the polynomial ring in three indeterminates, which we can both interpret as the polynomial ring in the coordinates resp.\ in the differential operators.
	Consider the matrix $A=\begin{bmatrix} x_1 & x_2 & x_3 \end{bmatrix}$ representing the normals of circles centered around the origin resp.\ the divergence.
	The right kernel of $A$ is given by the operator
	\begin{align*}
	B=\begin{bmatrix}
	0 & x_3 & -x_2 \\
	-x_3 & 0 & x_1 \\
	x_2 & -x_1 & 0 \\
	\end{bmatrix}
	\end{align*}
	representing tangent spaces of circles centered around the origin resp.\ the curl, and these parametrize the solutions of $A$.
	A posterior mean field is demonstrated in Figure~\ref{figure_example_sphere} when assuming equal covariance functions $k$ for 3 uncorrelated parametrizing functions and the covariance function for the tangent field is

		\[
		\begin{bmatrix}
			y_1y_2+z_1z_2 & -y_1x_2 & -z_1x_2 \\
			-x_1y_2 & x_1x_2+z_1z_2 & -z_1y_2 \\
			-x_1z_2 & -y_1z_2 & x_1x_2+y_1y_2\\
		\end{bmatrix}\cdot k(x_1,y_1,z_1,x_2,y_2,z_2)\mbox{ .}
		\]

We demonstrate how to compute $B$ and $A'$ for this example using  Macaulay2 \cite{M2}.
\begin{verbatim}
i1 : R=QQ[d1,d2,d3]
o1 = R
o1 : PolynomialRing
i2 : A=matrix{{d1,d2,d3}}
o2 = | d1 d2 d3 |
             1       3
o2 : Matrix R  <--- R
i3 : B = generators kernel A
o3 = {1} | -d2 0   -d3 |
{1} | d1  -d3 0   |
{1} | 0   d2  d1  |
             3       3
o3 : Matrix R  <--- R
i4 : A1 = transpose generators kernel transpose B
o4 = | d1 d2 d3 |
             1       3
o4 : Matrix R  <--- R\end{verbatim}

\end{example}
    \begin{example}\label{example_sphere2}
    We construct a prior for smooth tangent fields on the sphere without sources and sinks.    
    We work in the third polynomial Weyl algebra $R=\R[x,y,z]\langle \partial_x, \partial_y, \partial_z\rangle$.
    I.e., we are interested in $\sol_A(\F)=\{v\in C^\infty(S^2,\R^3)|Av=0\}$ for
    \begin{align*}
    	A:=\begin{bmatrix}
    		x & y & z \\
    		\partial_x & \partial_y & \partial_z \\    	
    	\end{bmatrix}\mbox{ .}
    \end{align*}
    The right kernel
    \begin{align*}
	    B:=\begin{bmatrix}
	    -z\partial_y+y\partial_z\\
	    z\partial_x-x\partial_z\\
	    -y\partial_x+x\partial_y\\
	    \end{bmatrix}\mbox{ .}
    \end{align*}
    can be checked to yield a parametrization of $\sol_\F(A)$
    Assuming a squared exponential covariance functions $k$ for the para\-metrizing function, a demonstration can be found in Figure~\ref{figure_example_sphere}
        
    \begin{figure}[ht]
      	\centering
   		\centerline{\includegraphics[width=0.45\textwidth]{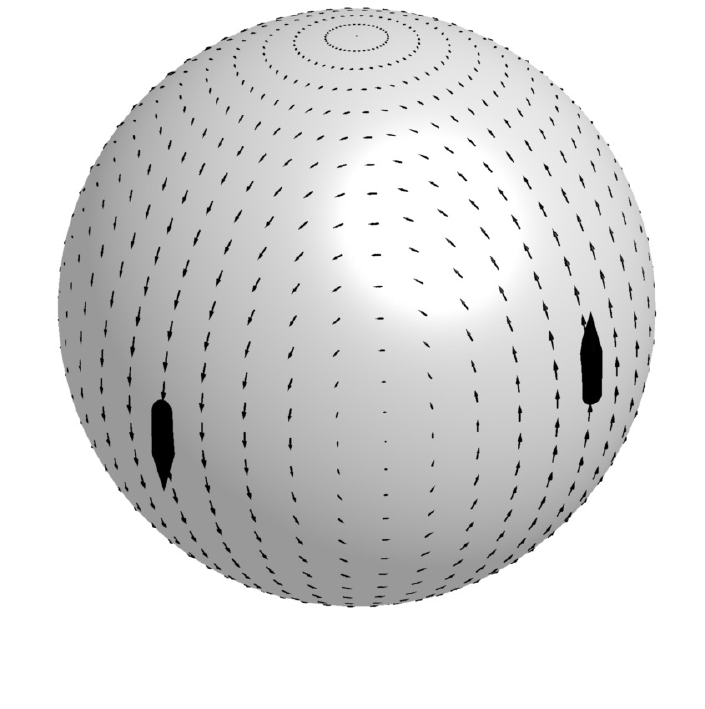}\ \includegraphics[width=0.46\textwidth]{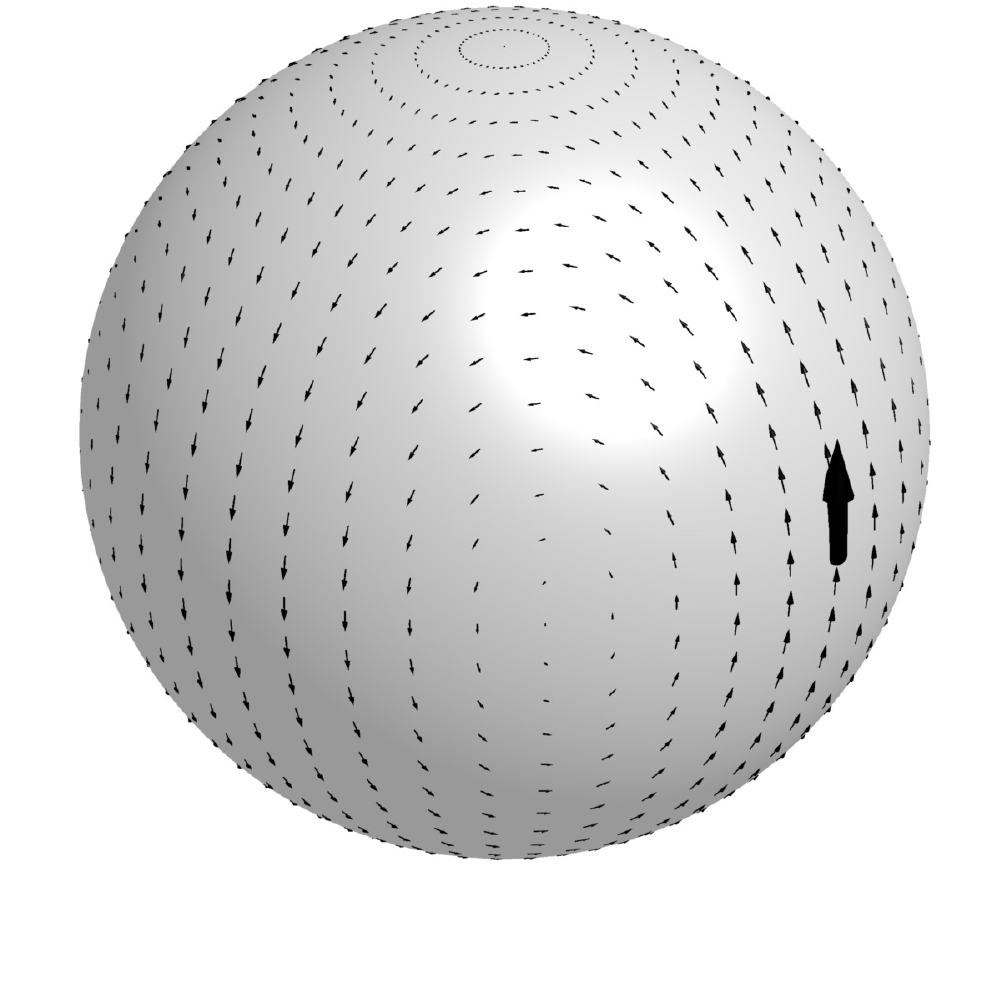}}
       	\vspace{-3em}
       	\caption{Taking the squared exponential covariance function for $k$ in Example~\ref{example_sphere} yields the left smooth mean tangent field on the sphere after conditioning at 4 evenly distributed points on the equator with two opposite tangent vectors pointing north and south each.
        The two visible of these four vectors are displayed significantly bigger.
        Conditioning the prior in Example~\ref{example_sphere2} at 2 opposite points on the equator with tangent vectors both pointing north (displayed bigger) yields the right mean field.}
       	\label{figure_example_sphere}
    \end{figure}

    \end{example}

\section{Conclusion}

The paper constructs multi-output Gaussian process priors, which adhere to linear operator equations.
With these priors few observations yield a precise regression model with strong extrapolation capabilities (cf.\ Examples~\ref{example_maxwell}, \ref{example_sphere}, and \ref{example_sphere2}).
This construction is fully algorithmic and rather general, as it allows linear systems of differential equations with constant or variable coefficients, shift equations, or multiplications with variables.
It could be applied to settings from physics (cf.\ Examples~\ref{example_maxwell}), geometric settings with potential applications in geomathematics and weather prediction (cf.\ Examples~\ref{example_maxwell}, \ref{example_sphere}, and \ref{example_sphere2}), or to observe and control systems (cf.\ Example~\ref{example_control}).
The main restriction is that the solutions of the system of equations must be parametrizable.

The author hopes that the results can be generalized from parametrizable solution sets to the general case using a Monge parametrization (computable via the purity filtration \cite{QGrade,Q_habil,BaPurity_MTNS10}) and right hand sides \cite{NoisyLinearOperatorEquationsGP}.
It would also be interesting to apply to parameter estimation (cf.\ Example~\ref{example_control}), boundary conditions \cite{NoisyLinearOperatorEquationsGP}, and to clarify the connection between the algebra, functional analysis, topology, and measure theory used in this paper.
Finally, experimental results would be interesting which covariance function for the parametrizing functions is most suitable.

\section*{Acknowledgments}

The authors thanks M.\ Barakat, S.\ Gutsche, C.\ Kaus, D.\ Moser, S.\ Posur, and O.\ Wittich for discussions concerning this paper, W.\ Plesken, A.\ Quadrat, D.\ Robertz, and E.\ Zerz for introducing him to the algebraic background of this paper, S.\ Thewes for introducing him to Gaussian processes, and the authors of \cite{LinearlyConstrainedGP} for providing the starting point of this work.
This work owes much to comments from anonymous reviewers.

\bibliographystyle{plain}

\begin{thebibliography}{10}
	
	\bibitem{al}
	William~W. Adams and Philippe Loustaunau.
	\newblock {\em An introduction to {G}r\"obner bases}.
	\newblock Graduate Studies in Mathematics. American Mathematical Society, 1994.
	
	\bibitem{BaPurity_MTNS10}
	Mohamed Barakat.
	\newblock
	\href{http://www.conferences.hu/mtns2010/proceedings/Papers/288_451.pdf}{Purity
		filtration and the fine structure of autonomy}.
	\newblock In {\em {Proceedings of the 19th International Symposium on
			Mathematical Theory of Networks and Systems -
			\href{http://www.conferences.hu/mtns2010/}{MTNS 2010}}}, pages 1657--1661,
	Budapest, Hungary, 2010.
	
	\bibitem{BS88}
	David Bayer and Michael Stillman.
	\newblock On the complexity of computing syzygies.
	\newblock {\em Journal of Symbolic Computation}, 6(2-3):135--147, 1988.
	
	\bibitem{RKHSProbabilityStatistics}
	A.~Bertinet and Thomas~C. Agnan.
	\newblock {\em {Reproducing Kernel Hilbert Spaces in Probability and
			Statistics}}.
	\newblock Kluwer Academic Publishers, 2004.
	
	\bibitem{Buch}
	Bruno Buchberger.
	\newblock An algorithm for finding the basis elements of the residue class ring
	of a zero dimensional polynomial ideal.
	\newblock {\em J. Symbolic Comput.}, 41(3-4):475--511, 2006.
	\newblock Translated from the 1965 German original by Michael P. Abramson.
	
	\bibitem{ManifoldGPs}
	Roberto Calandra, Jan Peters, Carl~E. Rasmussen, and Marc~P. Deisenroth.
	\newblock Manifold {G}aussian processes for regression.
	\newblock In {\em International Joint Conference on Neural Networks}, pages
	3338--3345, 2016.
	
	\bibitem{CQR05_nonote}
	Fr{\'e}d{\'e}ric Chyzak, Alban Quadrat, and Daniel Robertz.
	\newblock Effective algorithms for parametrizing linear control systems over
	{O}re algebras.
	\newblock {\em Appl. Algebra Engrg. Comm. Comput.}, 16(5):319--376, 2005.
	
	\bibitem{singular410}
	Wolfram Decker, Gert-Martin Greuel, Gerhard Pfister, and Hans Sch\"onemann.
	\newblock {\sc Singular} {4-1-0} --- {A} computer algebra system for polynomial
	computations.
	\newblock \url{http://www.singular.uni-kl.de}, 2016.
	
	\bibitem{DFR15}
	Marc~Peter Deisenroth, Dieter Fox, and Carl~Edward Rasmussen.
	\newblock {G}aussian processes for data-efficient learning in robotics and
	control.
	\newblock {\em IEEE Trans. Pattern Anal. Mach. Intell.}, 37(2):408--423, 2015.
	
	\bibitem{ScalableLogDeterminants}
	Kun Dong, David Eriksson, Hannes Nickisch, David Bindel, and Andrew~Gordon
	Wilson.
	\newblock Scalable log determinants for gaussian process kernel learning.
	\newblock 2017.
	\newblock (\href{http://arxiv.org/abs/1711.03481}{\texttt{arXiv:1711.03481}}).
	
	\bibitem{duvenaud-thesis-2014}
	David Duvenaud.
	\newblock {\em Automatic Model Construction with {G}aussian Processes}.
	\newblock PhD thesis, University of Cambridge, 2014.
	
	\bibitem{eis}
	David Eisenbud.
	\newblock {\em Commutative Algebra with a View Toward Algebraic Geometry},
	volume 150 of {\em Graduate Texts in Mathematics}.
	\newblock Springer-Verlag, 1995.
	
	\bibitem{GHSQuasars}
	Roman Garnett, Shirley Ho, and Jeff~G. Schneider.
	\newblock Finding galaxies in the shadows of quasars with {G}aussian processes.
	\newblock In Francis~R. Bach and David~M. Blei, editors, {\em ICML}, volume~37
	of {\em JMLR Workshop and Conference Proceedings}, pages 1025--1033.
	JMLR.org, 2015.
	
	\bibitem{GerI}
	Vladimir~P. Gerdt.
	\newblock Involutive algorithms for computing {G}r\"obner bases.
	\newblock In {\em Computational commutative and non-commutative algebraic
		geometry}, volume 196 of {\em NATO Sci. Ser. III Comput. Syst. Sci.}, pages
	199--225. 2005.
	
	\bibitem{NoisyLinearOperatorEquationsGP}
	Thore Graepel.
	\newblock Solving noisy linear operator equations by gaussian processes:
	Application to ordinary and partial differential equations.
	\newblock In {\em Proceedings of the Twentieth International Conference on
		International Conference on Machine Learning}, ICML'03, pages 234--241. AAAI
	Press, 2003.
	
	\bibitem{M2}
	Daniel~R. Grayson and Michael~E. Stillman.
	\newblock Macaulay2, a software system for research in algebraic geometry.
	\newblock \url{http://www.math.uiuc.edu/Macaulay2/}.
	
	\bibitem{gp}
	G.~Greuel and G.~Pfister.
	\newblock {\em A {S}ingular introduction to commutative algebra}.
	\newblock Springer-Verlag, 2002.
	\newblock With contributions by Olaf Bachmann, Christoph Lossen and Hans
	Sch\"onemann.
	
	\bibitem{GaussianProcessesforBigData}
	James Hensman, Nicol{\'{o}} Fusi, and Neil~D. Lawrence.
	\newblock {G}aussian processes for big data.
	\newblock In {\em Proceedings of the Twenty-Ninth Conference on Uncertainty in
		Artificial Intelligence}, 2013.
	
	\bibitem{LawrencePNAS2015}
	Antti Honkela, Jaakko Peltonen, Hande Topa, Iryna Charapitsa, Filomena
	Matarese, Korbinian Grote, {Hendrik G.} Stunnenberg, George Reid, {Neil D.}
	Lawrence, and Magnus Rattray.
	\newblock Genome-wide modeling of transcription kinetics reveals patterns of
	rna production delays.
	\newblock {\em Proceedings of the National Academy of Sciences},
	112(42):13115--13120, 2015.
	
	\bibitem{ScalableGaussianProcesses}
	Pavel Izmailov, Alexander Novikov, and Dmitry Kropotov.
	\newblock Scalable {G}aussian processes with billions of inducing inputs via
	tensor train decomposition, 2017.
	\newblock (\href{https://arxiv.org/abs/1710.07324}{\tt arXiv:math/1710.07324}).
	
	\bibitem{GPLevyProcessPriors}
	Phillip~A Jang, Andrew Loeb, Matthew Davidow, and Andrew~G Wilson.
	\newblock Scalable levy process priors for spectral kernel learning.
	\newblock In I.~Guyon, U.~V. Luxburg, S.~Bengio, H.~Wallach, R.~Fergus,
	S.~Vishwanathan, and R.~Garnett, editors, {\em Advances in Neural Information
		Processing Systems 30}, pages 3940--3949. 2017.
	
	\bibitem{jaynes1968prior}
	Edwin~T. Jaynes.
	\newblock Prior probabilities.
	\newblock {\em IEEE Transactions on systems science and cybernetics},
	4(3):227--241, 1968.
	
	\bibitem{jaynes2003probability}
	Edwin~T. Jaynes and G.~Larry Bretthorst.
	\newblock {\em Probability Theory: The Logic of Science}.
	\newblock Cambridge University Press, 2003.
	
	\bibitem{LinearlyConstrainedGP}
	Carl Jidling, Niklas Wahlstr{\"o}m, Adrian Wills, and Thomas~B. Sch{\"o}n.
	\newblock Linearly constrained {G}aussian processes.
	\newblock 2017.
	\newblock (\href{http://arxiv.org/abs/1703.00787}{\texttt{arXiv:1703.00787}}).
	
	\bibitem{DNNasGP}
	Jaehoon Lee, Yasaman Bahri, Roman Novak, Samuel~S. Schoenholz, Jeffrey
	Pennington, and Jascha Sohl-Dickstein.
	\newblock Deep neural networks as {G}aussian processes, 2017.
	\newblock (\href{http://arxiv.org/abs/1711.00165}{\texttt{arXiv:1711.00165}}).
	
	\bibitem{LevThesis}
	Viktor Levandovskyy.
	\newblock {\em {Non-commutative Computer Algebra for polynomial algebras:
			Gr\"{o}bner bases, applications and implementation}}.
	\newblock PhD thesis, University of Kaiserslautern, June 2005.
	
	\bibitem{plural}
	Viktor Levandovskyy and Hans Sch{\"o}nemann.
	\newblock P{LURAL}---a computer algebra system for noncommutative polynomial
	algebras.
	\newblock In {\em Proceedings of the 2003 International Symposium on Symbolic
		and Algebraic Computation}, pages 176--183 (electronic). ACM, 2003.
	
	\bibitem{MacedoCastro2008}
	Ives Mac{\^e}do and Rener Castro.
	\newblock Learning divergence-free and curl-free vector fields with
	matrix-valued kernels.
	\newblock {\em Instituto Nacional de Matematica Pura e Aplicada, Brasil, Tech.
		Rep}, 2008.
	
	\bibitem{Mayr}
	Ernst Mayr.
	\newblock Membership in polynomial ideals over {$\mathbb{Q}$} is exponential
	space complete.
	\newblock In {\em S{TACS} 89 ({P}aderborn, 1989)}, volume 349 of {\em Lecture
		Notes in Comput. Sci.}, pages 400--406. Springer, Berlin, 1989.
	
	\bibitem{mayrMeyer}
	Ernst~W Mayr and Albert~R Meyer.
	\newblock The complexity of the word problems for commutative semigroups and
	polynomial ideals.
	\newblock {\em Advances in mathematics}, 46(3):305--329, 1982.
	
	\bibitem{Ob}
	Ulrich Oberst.
	\newblock Multidimensional constant linear systems.
	\newblock {\em Acta Appl. Math.}, 20(1-2):1--175, 1990.
	
	\bibitem{OGR09}
	Michael~A. Osborne, Roman Garnett, and Stephen~J. Roberts.
	\newblock {G}aussian processes for global optimization.
	\newblock In {\em 3rd international conference on learning and intelligent
		optimization (LION3)}, pages 1--15, 2009.
	
	\bibitem{Q_CIRM10}
	Alban Quadrat.
	\newblock {An introduction to constructive algebraic analysis and its
		applications}.
	\newblock In {\em Journ{\'e}es Nationales de Calcul Formel}, volume~1 of {\em
		{Les cours du CIRM}}, pages 279--469. CIRM, Luminy, 2010.
	\newblock
	(\url{http://ccirm.cedram.org/ccirm-bin/fitem?id=CCIRM_2010__1_2_281_0}).
	
	\bibitem{Q_habil}
	Alban Quadrat.
	\newblock {Syst{\`e}mes et Structures -- Une approche de la th{\'e}orie
		math{\'e}matique des syst{\`e}mes par l'analyse alg{\'e}brique constructive}.
	\newblock April 2010.
	\newblock Habilitation thesis.
	
	\bibitem{QGrade}
	Alban Quadrat.
	\newblock Grade filtration of linear functional systems.
	\newblock {\em Acta Appl. Math.}, 127:27--86, 2013.
	
	\bibitem{RW}
	Carl~Edward Rasmussen and Christopher K.~I. Williams.
	\newblock {\em {G}aussian Processes for Machine Learning (Adaptive Computation
		and Machine Learning)}.
	\newblock The MIT Press, 2006.
	
	\bibitem{JO}
	Daniel Robertz.
	\newblock {\em {{\tt JanetOre}: A $\mathsf{Maple}$ package to compute a {\sc
				Janet} basis for modules over {\sc Ore} algebras}}, 2003-2008.
	
	\bibitem{robphd}
	Daniel Robertz.
	\newblock {\em {Formal Computational Methods for Control Theory}}.
	\newblock PhD thesis, RWTH Aachen, 2006.
	
	\bibitem{RobRecentProgress}
	Daniel Robertz.
	\newblock Recent progress in an algebraic analysis approach to linear systems.
	\newblock {\em Multidimensional Syst. Signal Process.}, 26(2):349--388, April
	2015.
	
	\bibitem{scheuerer2012covariance}
	Michael Scheuerer and Martin Schlather.
	\newblock Covariance models for divergence-free and curl-free random vector
	fields.
	\newblock {\em Stochastic Models}, 28(3):433--451, 2012.
	
	\bibitem{SZinverseSyzygies}
	Werner~M. Seiler and Eva Zerz.
	\newblock The inverse syzygy problem in algebraic systems theory.
	\newblock {\em PAMM}, 10(1):633--634, 2010.
	
	\bibitem{SimSch16}
	C.-J. Simon-Gabriel and B.~Sch{\"o}lkopf.
	\newblock Kernel distribution embeddings: Universal kernels, characteristic
	kernels and kernel metrics on distributions.
	\newblock Technical report, 2016.
	\newblock (\href{https://arxiv.org/abs/1604.05251}{\texttt{arXiv:1604.05251}}).
	
	\bibitem{Snelson03warpedgaussian}
	Edward Snelson, Carl~Edward Rasmussen, and Zoubin Ghahramani.
	\newblock Warped gaussian processes.
	\newblock In Sebastian Thrun, Lawrence~K. Saul, and Bernhard Schölkopf,
	editors, {\em NIPS}, pages 337--344. MIT Press, 2003.
	
	\bibitem{MagneticFieldGP}
	Arno Solin, Manon Kok, Niklas Wahlstr{\"{o}}m, Thomas~B. Sch{\"{o}}n, and Simo
	S{\"{a}}rkk{\"{a}}.
	\newblock Modeling and interpolation of the ambient magnetic field by
	{G}aussian processes.
	\newblock 2015.
	\newblock (\href{http://arxiv.org/abs/1509.04634}{\texttt{arXiv:1509.04634}}).
	
	\bibitem{SturmfelsWhatIs}
	Bernd Sturmfels.
	\newblock What is... a {G}r\"{o}bner basis?
	\newblock {\em Notices of the AMS}, 52(10):2--3, 2005.
	
	\bibitem{TLRB_gp}
	Silja Thewes, Markus Lange-Hegermann, Christoph Reuber, and Ralf Beck.
	\newblock {A}dvanced {G}aussian {P}rocess {M}odeling {T}echniques.
	\newblock In {\em Design of Experiments (DoE) in Powertrain Development}.
	Expert, 2015.
	
	\bibitem{Titsias09variationallearning}
	Michalis~K. Titsias.
	\newblock Variational learning of inducing variables in sparse {G}aussian
	processes.
	\newblock In {\em Artificial Intelligence and Statistics 12}, pages 567--574,
	2009.
	
	\bibitem{TopologicalVectorSpaces}
	F.~Treves.
	\newblock {\em Topological Vector Spaces, Distributions and Kernels}.
	\newblock Dover books on mathematics. Academic Press, 1967.
	
	\bibitem{Wahlstrom13modelingmagnetic}
	Niklas Wahlstr{\"o}m, Manon Kok, Thomas~B. Sch{\"o}n, and Fredrik Gustafsson.
	\newblock Modeling magnetic fields using {G}aussian processes.
	\newblock In {\em in Proceedings of the 38th International Conference on
		Acoustics, Speech, and Signal Processing (ICASSP)}, 2013.
	
	\bibitem{SpectralMixtureKernels}
	Andrew~G. Wilson and Ryan~Prescott Adams.
	\newblock {G}aussian process kernels for pattern discovery and extrapolation.
	\newblock In {\em ICML (3)}, volume~28 of {\em JMLR Workshop and Conference
		Proceedings}, pages 1067--1075. JMLR.org, 2013.
	
	\bibitem{wilson2015msgp}
	Andrew~G. Wilson, Christoph Dann, and Hannes Nickisch.
	\newblock Thoughts on massively scalable {G}aussian processes.
	\newblock 2015.
	\newblock (\href{http://arxiv.org/abs/1511.01870}{\texttt{arXiv:1511.01870}}).
	
	\bibitem{DeepKernelLearning}
	Andrew~G. Wilson, Zhiting Hu, Ruslan Salakhutdinov, and Eric~P. Xing.
	\newblock Deep kernel learning.
	\newblock 2015.
	\newblock \href{http://arxiv.org/abs/1511.02222}{\texttt{arXiv:1511.02222}}).
	
	\bibitem{Zerz}
	Eva Zerz.
	\newblock {\em Topics in multidimensional linear systems theory}, volume 256 of
	{\em Lecture Notes in Control and Information Sciences}.
	\newblock London, 2000.
	
	\bibitem{ZSHinverseSyzygies}
	Eva Zerz, Werner~M Seiler, and Marcus Hausdorf.
	\newblock On the inverse syzygy problem.
	\newblock {\em Communications in Algebra}, 38(6):2037--2047, 2010.
	
\end{thebibliography}

\def\cprime{$'$}

\end{document}